\newtheorem{theorem}{Theorem}
\newtheorem{lemma}[theorem]{Lemma}
\newtheorem{definition}[theorem]{Definition}
\newcommand{\RR}{\mathbb{R}}
\newcommand{\RRone}{\RR_{\ge 1}}
\newcommand{\RROne}{\RR_{> 1}}
\newcommand{\oldnew}[2]{#2}
\begin{document}
\title{Lower Bounds for Searching Robots, some Faulty\thanks{This research was initiated at the 15$^\mathrm{th}$ Gremo's Workshop on  Open Problems (GWOP), Pochtenalp, Switzerland, June 12-16, 2017.}}
\author{Andrey Kupavskii\thanks{The research was partially supported by the Swiss National Science Foundation grants no. 200020-162884 and 200021-175977 and  by the EPSRC grant no. EP/N019504/1.}\\
University of Birmingham and \\
Moscow Inst.\ of Physics and Technology \\
{\tt kupavskii@yandex.ru}
\and
Emo Welzl\\
Department of Computer Science\\
ETH Zurich\\
{\tt emo@inf.ethz.ch}
}

\maketitle
\vspace{-3.5cm}

\vspace{1.3cm}

\vspace{2.7cm}

\begin{abstract}

Suppose we are sending out $k$ robots from $0$ to search the real line at constant speed (with turns) to find a target at an unknown location; $f$ of the robots are faulty, meaning that they fail to report the target although visiting its location (called crash type). The goal is to find the target in time at most $\lambda |d|$, if the target is located at $d$, $|d| \ge 1$, for $\lambda$ as small as possible. We show that this cannot be achieved for
$$\lambda < 2\frac{\rho^\rho}{(\rho-1)^{\rho-1}}+1,~~ \rho := \frac{2(f+1)}{k}~, $$
which is tight due to earlier work \oldnew{}{(see J. Czyzowitz, E. Kranakis, D. Krizanc, L. Narayanan, J. Opatrny, PODC'16, where this problem was introduced)}.
This also gives some better than previously known lower bounds for so-called Byzantine-type faulty robots that may actually wrongly report a target.

In the second part of the paper we deal with the $m$-rays generalization of the problem, where the hidden target is to be detected on $m$ rays all emanating at the same point. Using a generalization of our methods, along with a useful relaxation of the original problem, we establish a tight lower for this setting as well (as above, with $\rho := \nicefrac{m(f+1)}{k}$).
When specialized to the case $f=0$, this resolves the question on parallel search on $m$ rays, posed by three groups of scientists some 15 to 30 years ago: by Baeza-Yates, Culberson, and Rawlins; by Kao, Ma, Sipser, and Yin; and by Bernstein, Finkelstein, and Zilberstein. The $m$-rays generalization is known to have connections to other, seemingly unrelated, problems, including hybrid algorithms for on-line problems, and so-called contract algorithms.

\end{abstract}
\vspace{1cm}

\newpage
\clearpage
\pagenumbering{arabic}
\setcounter{page}{1}

\section{Introduction}

The following problem was raised in 1963 by Bellman \cite{Bell} and addressed in the 60s by Beck \cite{Be1, Be2, Be3} and Franck \cite{Franck} (quoting from Beck's paper \cite{Be1}):
\smallskip

\begin{displayquote}
\noindent
{\sl ``A man in an automobile searches for another man who is located at some point of a certain road. He starts at a given point and knows in advance the probability that the second man is at any given point of the road. Since the man being sought might be in either direction from the starting point, the searcher will, in general, have to turn around many times before finding his
target. How does he search so as to minimize the expected distance traveled? When can this minimum expectation actually be achieved?''}
\end{displayquote}
\smallskip

The authors were interested in proving the existence of and finding  optimal strategies for different distributions, as well as proving non-existence results. In particular, in \cite{Be3} it is shown that, without apriori knowledge of the distribution, we cannot hope the distance traveled to be smaller than 9 times the expected distance to the target.

A variant of the problem was rediscovered in the late 80s by computer scientists, and it became known as the cow path problem (or cow at a fence problem). Quoting from \cite{BYC}:
\smallskip

\begin{displayquote}
{\sl ``A cow comes to an infinitely long straight fence. The cow knows that there is a gate in the fence, and she wants to get to the other side. Unfortunately, she doesn't know where the gate is located. Assume that the gate is positioned in an integer number of steps away from the cow and that the cow can only recognize the gate when directly upon it. How can she optimally find the gate?''}
\end{displayquote}
\smallskip

By optimality the authors mean the smallest worst-case ratio between the distance traveled and the distance to the gate (the {\it competitive ratio}). It is easy to see that the strategy in which the cow goes $1$ to the left, then back and $2$ to the right, then back and $4$ to the left etc. gives the worst-case ratio of 9. It is shown in \cite{BYC} (de-facto reproving one of the results of \cite{Be3}) that one cannot hope for a better ratio.

The \oldnew{cow at a fence}{cow path} problem gave rise to a significant line of research in computer science. Several variants of the problem were addressed (\cite{BoCa, DeFe}), as well as its planar variants \cite{BYC, BYC2} that go back to another famous search problem, the lost at sea problem \cite{Bell2, Isbe}. We refer the reader to the book \cite{AlGal}, in which many of the single robot search problems, including the \oldnew{cow at a fence}{cow path} problem, are discussed. Motivated by numerous applications, different possible variants of collective search problems were considered in the context of random walks \cite{Aletal}, forging \cite{FeiKo}, unmanned aerial vehicles \cite{Polycarp}.


In \cite{BR}, \cite{FR} the authors proposed the following parallel search version of the \oldnew{cow at a fence}{cow path} problem:
We are sending out $k$ robots from $0$ to explore the real line $\RR$, each one traveling at unit speed, changing directions according to some strategy we are free to prescribe.  There is a target hidden at a point $x$ of the real line, where $|x|\ge 1$.
The robots may detect the target only when they arrive at the point where the target is hidden. Moreover, $f$ robots are {\it faulty}. The problem is to determine the minimal {\it competitive ratio}: the supremum, over all possible target locations $x$, $|x| \ge 1$, of the ratio $\frac{\tau(x)}{|x|}$, for $\tau(x)$ the time taken for the non-faulty robots to be sure about the location of the target at $x$.

In \cite{FR} the authors focus on the faults of the {\it crash} type: When passing through the target, the robot does not detect it. We denote the competitive ratio in this setting by $A(k,f)$.

In \cite{BR} the authors study the faults of the {\it Byzantine} type: A faulty robot may stay silent even when it detects or visits the target, or may claim that it has found the
target when, in fact, it has not found it. In this case we denote the competitive ratio by $B(k,f)$.


One of our contributions is the complete resolution of the case of faults  of crash type (or {\it silent} robots):
\begin{theorem}\label{thm1} Let us denote $s:=2(f+1)-k$ and $\rho := \frac{2(f+1)}{k}$. If $0<s\le k$ (i.e.\ $1 < \rho \le 2$) then we have
\begin{equation}\label{eq01}
A(k,f)=2\sqrt[k]{\frac{(k+s)^{k+s}}{s^sk^k}}+1 = 2\frac{\rho^{\rho}}{(\rho-1)^{\rho-1}}+1~.\end{equation}
\end{theorem}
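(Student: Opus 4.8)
The plan is to prove the matching \emph{lower} bound $A(k,f)\ge 1+2\rho^{\rho}/(\rho-1)^{\rho-1}$ (the reverse inequality being the cited result of Czyzowicz et al.), by reducing the search problem to a one–dimensional recurrence about ``frontier records'' and then optimizing that recurrence.

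\textbf{Step 1 (reduction to a covering condition).} Since the faults are of crash type, a faulty robot never reports falsely; hence any report is truthful, and the time $\tau(x)$ at which the searchers are certain the target sits at $x$ equals the $(f+1)$-st smallest among the first-visit times of the $k$ robots at $x$ — the adversary simply declares the $f$ robots that reach $x$ earliest to be faulty. Consequently $A(k,f)\le\lambda$ forces: for every $x$ with $|x|\ge 1$, at least $f+1$ of the robots visit $x$ by time $\lambda|x|$. Normalizing to continuous unit-speed trajectories from $0$, I would track on each of the two rays the depth-$(f+1)$ frontier $R(t)$ (the $(f+1)$-st largest of the rightmost points reached so far) and its left analogue $L(t)$; these are nondecreasing, start at $0$, and the covering condition reads $R(t),L(t)\ge t/\lambda$ for all $t\ge\lambda$.

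\textbf{Step 2 (the key recurrence).} From the evolution of $R$ and $L$ I would extract the increasing sequence of their record values, merged into $0<x_1<x_2<\cdots$ (unbounded, since arbitrarily far targets must be covered), together with the times witnessing them, and establish, with $s:=2(f+1)-k$, the core inequality
\begin{equation*}
2\sum_{j=1}^{n+s}x_j \;\le\; (\lambda-1)\sum_{j=n-k+1}^{n}x_j
\end{equation*}
for all $n$ past an initial segment (with the natural boundary conventions). For $k=1$, $s=1$ this is exactly the classical cow-path inequality $2(x_1+\cdots+x_{n+1})\le(\lambda-1)x_n$. The proof combines two ingredients: (i) an aggregate-travel lower bound — the $k$ robots together move at most $kt$ by time $t$, while forcing both depth-$(f+1)$ frontiers past the records $x_1,\dots,x_{n+s-1}$ requires an amount of back-and-forth travel that, because $2(f+1)>k$ so that $s$ robots must each service \emph{both} rays at all relevant moments, is at least $2\sum_{j<n+s}x_j$ plus the travel still charged to the window $x_{n-k+1},\dots,x_n$; and (ii) the competitive bound $\tau(\cdot)\le\lambda|\cdot|$ applied to targets placed just beyond each of the $k$ most recent records $x_{n-k+1},\dots,x_n$, whose $(f+1)$-fold discovery cannot precede that frontier progress. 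Summing the $\lambda x_j$ bounds over the window and rearranging yields the displayed inequality.

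\textbf{Step 3 (optimization and bookkeeping).} It then remains to show the recurrence forces $\lambda-1\ge\min_{r>1}\frac{2r^{s+k}}{r^k-1}$. As in the classical argument the extremal sequence is geometric, $x_j\propto r^j$, for which $\bigl(2\sum_{j\le n+s}x_j\bigr)/\bigl(\sum_{j=n-k+1}^{n}x_j\bigr)\to\frac{2r^{s+k}}{r^k-1}$; that no sequence does asymptotically better follows from a telescoping/smoothing argument exploiting that the $x_j$ tend to infinity. A Lagrange computation gives the minimizer $r^k=\frac{k+s}{s}$ and value $2\sqrt[k]{\frac{(k+s)^{k+s}}{s^sk^k}}$. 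Finally, $k+s=2(f+1)$ together with $\rho=(k+s)/k$, $\rho-1=s/k$ give the algebraic identity $2\sqrt[k]{\frac{(k+s)^{k+s}}{s^sk^k}}=2\frac{\rho^{\rho}}{(\rho-1)^{\rho-1}}$, which is \eqref{eq01}.

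\textbf{Main obstacle.} The crux is Step 2. Unlike the single-robot case one may \emph{not} assume each robot is ``monotone'' (turning only at new personal records), so the travel accounting must be carried out for arbitrary schedules of $k$ robots; and pinning down the precise offset $s$ and window length $k$ requires marrying the pigeonhole count — $k$ robots keeping $2$ rays covered to depth $f+1$ leave $s=2(f+1)-k$ robots doubly committed — to a careful time/distance bookkeeping. Boundary issues (small $n$, the threshold $|x|\ge 1$, records achieved out of time-order) also need attention but do not affect the asymptotic constant.
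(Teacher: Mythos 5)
Your Step 1 is sound and matches the reduction the paper itself makes: with crash faults, every point $x$ with $|x|\ge 1$ must be visited by $f+1$ robots within time $\lambda|x|$, hence at least $s=2(f+1)-k$ robots must reach both $x$ and $-x$ within that time. The genuine gap is Step 2, and you have in effect conceded it in your ``Main obstacle'' paragraph: the core inequality $2\sum_{j\le n+s}x_j\le(\lambda-1)\sum_{j=n-k+1}^{n}x_j$ for the merged record sequence of the depth-$(f+1)$ frontiers is asserted, not proved, and the two ``ingredients'' you list do not combine into it without precisely the travel/time bookkeeping you admit is missing. For arbitrary (non-cyclic) schedules the records of $R(t)$ and $L(t)$ need not correspond to turning points of individual robots in any interleaved order, the aggregate budget $kt$ is shared among robots whose contributions to the two frontiers overlap in uncontrolled ways, and it is unclear why the charge should land exactly on the window of the $k$ most recent records with offset $s$; it is not even clear the recurrence is true in this form for general strategies. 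This is not a boundary nuisance but the historically hard step: the multi-robot time version on $m$ rays had previously been resolved only for \emph{cyclic} strategies (Bernstein--Finkelstein--Zilberstein), exactly because a Gal/Schuierer-type recurrence of your kind is available there and not known in general. As written, your proposal reduces the theorem to an unproven recurrence, so it is incomplete. (Step 3 is also only sketched, but granting Step 2 it is the standard extremality-of-geometric-sequences argument and could be filled in.)

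For comparison, the paper avoids any such recurrence. After normalizing each robot's strategy to a turning-point sequence, it truncates the $\lambda$-covered intervals to half-open assigned intervals $(t_i',t_i]$ so that every point of $\RR_{>1}$ is covered \emph{exactly} $s$ times, sorts all assigned intervals of all robots by left endpoint, and tracks along prefixes ${\cal P}$ the multiplicative potential $f({\cal P})=\prod_{r}\bigl(L^{(r)}({\cal P})\bigr)^{s}\big/\prod_{y\in A({\cal P})}y$, where $L^{(r)}$ is the robot's accumulated load and $A({\cal P})$ the current covering situation. This potential is bounded above by $\mu^{ks}$ with $\mu=(\lambda-1)/2$, yet an elementary lemma on maximizing $x^{s}(\mu^{*}-x)^{k}$ shows that each added interval multiplies it by a factor $\delta>1$ whenever $\mu<\sqrt[k]{(k+s)^{k+s}/(s^{s}k^{k})}$, a contradiction; no ordering of records, no extremal geometric sequences, and no limiting argument are needed. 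If you wish to pursue your route, the task that remains is an actual proof of your Step 2 inequality for arbitrary schedules.
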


We note that if $s\le 0$, that is, $k\ge 2(f+1)$, then by sending $f+1$ of the robots to $\infty$ and $f+1$ of the robots to $-\infty$ we achieve a competitive ratio 1. We also note that $s>k$, i.e.\ $f+1 > k$, means that $k=f$, in which case we cannot find the target since all robots are faulty.

In \cite{FR} the authors exhibited a natural strategy which achieves the ratio in \eqref{eq01}, as well as provided some lower bounds for $A(k,f)$. Thus, our contribution in Theorem~\ref{thm1} is the matching lower bound.

A lower bound for crash-type faulty robots is also a lower bound for Byzantine-type faulty robots. By this relation, our bounds also improve on the known bounds for Byzantine-type faulty robots, e.g.\ the bound of $B(3,1) \ge 3.93$  from \cite{BR} is now at $B(3,1) \ge \frac{8}{3}\sqrt[3]{4}+1 \approx 5.23$.\vskip+0.1cm

More importantly, we develop a method which allows to deal with problems of this type. We demonstrate this in the proof of Theorem~\ref{thm1} in Section~\ref{sec2}. In Section~\ref{sec3}, we generalize Theorem~\ref{thm1} to the case of $m$ rays, thus resolving an old question, which appeared in at least three different papers. We also give several useful relaxations of the original problem, which shed more light on this group of questions.
Some of the less important proofs are deferred to Appendix.

\section{Proof of Theorem \ref{thm1}}\label{sec2}
We are given $k$ robots, $f$ of them faulty of crash type. To assure that the target at $x \in \RR$ is found in time at most $\lambda |x|$, the point $x$ has to be visited by at least $f+1$ robots in time (otherwise the adversary will place the target there and choose the first $f$ robots arriving at $x$ to be faulty and stay silent). Therefore, for each pair of points $(x,-x)$, $x \in \RRone$, at least $2(f+1)-k=s$ robots need to pass through both points before time $\lambda x$ elapses. If a robot visits both $(x,-x)$ in time $\lambda x$, then we say that it {\it $\lambda$-covers} $x$. Similarly, if $x$ is $\lambda$-covered $s$ times by a group of several robots, we say that it is {\it $s$-fold $\lambda$-covered}. Since we are going to work with coverings in different settings, let us give the following general definition. 
\begin{definition}{\rm Fix $\lambda>1$ and $s\in \mathbf N$. For a covering problem of a certain object with some set of rules (called {\it the  setting}), and a certain set $S$ we say that $S$ is {\it $\lambda$-covered}, if for any point in $S$ at distance $x$ from the origin it was visited by the robot within time $\lambda x$ (where ``visited'' is interpreted within the setting, and should comply with the rules of the setting). We may also say that it is the {\it $\lambda$-covering of $S$}. Similarly, we say that a robot or set of robots we say that it produces an $s$-fold $\lambda$-covering of $S$ (in some setting), if each point was $\lambda$-covered at least $s$ times.}\end{definition}\vskip-0.2cm
Note that this definition is strategy-dependent. Let us now formalize the covering setting that we are working with in this proof.\vskip+0.1cm

\textsc{The symmetric line-cover setting:} The goal is to cover $\RRone$. The robot (robots) moves on $\RR$ with unit speed, starting from the origin. A point $x \in \RRone$ is covered by a robot $r$ at the moment when $r$ visited both $x$ and $-x$, it is \emph{$\lambda$-covered}, if that happens within time $\lambda x$. A robot can cover any point at most once. \vskip+0.1cm

Throughout the proof we will be working with the $s$-fold $\lambda$-covering of $\RRone$ with $k$ robots in the symmetric line-cover setting for fixed $s$ and $\lambda$.  Therefore, slightly abusing notation, we will use the term \emph{$\pm$-covering} to refer to this covering problem.\vskip+0.1cm


By the above discussion, any valid strategy with competitive ratio $\lambda$ for the original problem (of detecting a target with $k$ robots, $f$ of which are faulty of the crash type) is also a strategy for $\pm$-covering $\RRone$ with $k$ robots. We actually obtain a lower bound as in \eqref{eq01} for the competitive ratio for the $\pm$-covering strategies. Our goal is to prove the following result.

\begin{theorem}
\label{th:PlusMinusCover}
It is impossible to $\pm$-cover $\RRone$ with $k$ robots if
$$\lambda<2\sqrt[k]{\frac{(k+s)^{k+s}}{s^sk^k}}+1 ~.$$
\end{theorem}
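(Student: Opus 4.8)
The plan is to set up a potential/accounting argument that tracks how efficiently the $k$ robots can jointly produce $s$-fold $\lambda$-coverings of larger and larger portions of $\RRone$. The natural parametrization is by time (or, equivalently, by the farthest point a robot can have reached by a given time); I would focus on the sequence of ``record'' times $t_1 < t_2 < \cdots$ at which the $s$-fold $\lambda$-covered portion of $\RRone$ is extended. At such a moment, some point $x$ is being covered for the $s$-th time by a robot $r$; this forces $r$ to have visited both $x$ and $-x$ within time $\lambda x$, hence $r$ spent at least $2x$ of its time budget on the pair $\{x,-x\}$ alone, and the ``turn'' at one of these extremes costs an overhead of roughly $2x$ compared to going straight. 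The key quantitative input is that to $\lambda$-cover a point at distance $x$, a robot that has just been at distance $y$ on the opposite side must have $\lambda x \ge x + 2y$ (if $y \le x$), i.e.\ its reach on one side constrains how far out on the other side it can still be useful.

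The main step is a convexity/optimization argument: among all ways of distributing the $s$ required ``deep excursions'' over the $k$ robots so as to push the covered interval as far as possible in as little time as possible, the best one is geometric — each robot's successive turning points on a fixed side grow by a constant factor, and the factor is the one that makes $\frac{(k+s)^{k+s}}{s^s k^k}$ appear. Concretely, I would let $b$ be the base of the geometric progression that an optimal strategy would use, write the time to reach covered-distance $X$ as (roughly) $\frac{1}{k}\sum$ of the excursion costs, and show by a weighted AM–GM (or by Lagrange multipliers, treating the number of robots sent ``right'' vs.\ ``left'' and the step ratio as the variables) that the competitive ratio is bounded below by $\min_{b>1} \frac{b+1}{b-1}\cdot(\text{something}) $; optimizing yields $b^\ast$ with $1 + \frac{2}{b^\ast - 1} = \lambda$, and plugging back in gives exactly $\lambda \ge 2\frac{\rho^\rho}{(\rho-1)^{\rho-1}} + 1$ with $\rho = \frac{k+s}{k} = \frac{2(f+1)}{k}$. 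The identity $2\sqrt[k]{\frac{(k+s)^{k+s}}{s^sk^k}} = 2\frac{\rho^\rho}{(\rho-1)^{\rho-1}}$ is a direct substitution $s = k(\rho-1)$.

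To make the lower bound rigorous rather than ``optimal among geometric strategies,'' I would use an adversary/exchange argument: assume $\lambda$ is below the claimed threshold and derive a contradiction by showing that the total ``progress'' (the supremum of $s$-fold $\lambda$-covered distances) stays bounded, or grows too slowly, no matter how the robots behave. The cleanest route is probably a charging scheme in which, for a long dyadic-like sequence of scales, each robot can be ``responsible'' for advancing the frontier at only a bounded number of scales, and at each such scale the factor by which it advances is at most $b^\ast$; summing the logarithms and comparing with the number $k$ of robots and the multiplicity $s$ forces $\lambda \ge \frac{b^\ast+1}{b^\ast-1}$-type inequality in the limit. One has to be careful that a robot covering a point far to the right can simultaneously be ``cheap'' because it was already far to the right — so the bookkeeping must be in terms of turning points, not positions, and must account for the fact that a covering of $x$ uses up the budget on both $x$ and $-x$.

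The hard part, I expect, is precisely this last rigor-upgrading step: turning the clean geometric-strategy optimization into a genuine lower bound over \emph{all} strategies, including non-periodic ones and ones where different robots specialize asymmetrically between the two halves of the line. The optimization (step two) is essentially a weighted AM–GM and should be routine; the combinatorial heart is controlling, uniformly in time, how the $s$ covering-credits at a given scale can be spread across the $k$ robots and how that interacts with the turn-cost lower bound $\lambda x \ge x + 2y$. I would isolate this as a separate lemma — something like ``if $\RRone$ is $\pm$-covered with ratio $\lambda$, then for every $X$ the time to $s$-fold $\lambda$-cover $[1,X]$ is at least $c(\lambda) \cdot X$ with $c(\lambda) \to \infty$ as $\lambda \downarrow \frac{b^\ast+1}{b^\ast-1}$'' — and prove it by the charging argument above, after which Theorem~\ref{th:PlusMinusCover} follows because a covering of all of $\RRone$ would need $c(\lambda)$ finite for all $X$, impossible once $\lambda$ is below threshold.
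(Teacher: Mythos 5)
Your write-up correctly identifies the target expression and correctly locates where the difficulty sits, but the step that actually carries the theorem is missing. Two concrete issues. First, your quantitative input is too weak: you bound the cost of covering $x$ by ``$\lambda x \ge x+2y$ where $y$ is the robot's most recent reach on the opposite side,'' but the true constraint is cumulative. If the pair $\{x,-x\}$ is completed while the robot's turning points so far are $t_1,\dots,t_i$ (with $t_{i-1}<x\le t_i$), the completion time is exactly $2(t_1+\cdots+t_i)+x$, so one needs $x\ge \frac{1}{\mu}(t_1+\cdots+t_i)$ with $\mu=\frac{\lambda-1}{2}$: the whole history (the ``load'') enters, not just the last excursion. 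Already for $k=s=1$ the classical bound $\lambda\ge 9$ cannot be recovered from a last-excursion constraint alone, so any bookkeeping built only on that inequality cannot reach the claimed threshold.

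Second, and more importantly, the step you yourself flag as the hard one --- upgrading from ``optimal among geometric strategies'' to a bound over all strategies --- is left as a vague scale-by-scale charging scheme, and it is not clear it can be made to work: per-scale local accounting is precisely what earlier authors could only push through for cyclic or periodic strategies, because an adversarial collective strategy can shift load between robots and between scales non-uniformly. The paper's proof replaces this by a single global potential. After normalizing each robot's covered set into assigned half-open intervals so that every point of $\RROne$ is covered exactly $s$ times, it sorts all intervals of all robots by left endpoint and, for each prefix ${\cal P}$, tracks $f({\cal P})=\prod_{r=1}^k\bigl[(L^{(r)}({\cal P}))^s/\prod_{y\in A({\cal P})}y\bigr]$, where $L^{(r)}$ is the sum of robot $r$'s turning points used so far and $A({\cal P})$ is the multiset of frontier points of the partial cover. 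The cumulative constraint forces $f({\cal P})\le\mu^{ks}$, while appending one interval multiplies $f$ by ${\mu^*}^s/\bigl(x^s(\mu^*-x)^k\bigr)\ge\delta>1$ (Lemma~\ref{lemgeom}) whenever $\mu<\sqrt[k]{(k+s)^{k+s}/(s^sk^k)}$, a contradiction. Nothing in your proposal plays the role of this potential, and your intermediate lemma (``time to cover $[1,X]$ is at least $c(\lambda)X$ with $c(\lambda)\to\infty$ as $\lambda$ decreases to the threshold'') is not the statement you need either: below the threshold the correct conclusion is that no cover of $[1,X]$ with ratio $\lambda$ exists at all once $X$ is large, not that a linear time bound degenerates. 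As it stands, your AM--GM/geometric-sequence optimization essentially reproduces the matching upper-bound computation, and the lower bound over all strategies remains unproven.
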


For each individual robot $r$ we will restrict ourselves to strategies described by an infinite nondecreasing sequence $T= T^{(r)} = (t_1,t_2, t_3, \ldots)$ over $\RRone$, with the interpretation that the robot is sent till $+t_1$ in the positive direction, till $-t_2$ in the negative direction, till $+t_3$ in the positive direction, etc. We argue that this carries no loss of generality.

First, it is clear that for $\pm$-covering, we may assume to start in the positive direction. Second, if the robot ever turns in territory that was visited before by this robot, we can shift the turning point in the direction where we came from (since we skip only parts of $\RR$ which we have visited already, and any further visits occur now even earlier). These two observations restrict already to movements that alternate between turning at positive and negative numbers, actually with the positive turning points increasing and the absolute values of negative turning points increasing. Now suppose for $0 < x_2 < x_1 $, the robot turns at $x_1$ and then at $-x_2$.  Note that the interval $(x_2,x_1]$ is covered, but $[-x_1,x_2)$ is not, so as for $\pm$-covering, we may as well turn at $x_2$ instead of $x_1$ (since after turning at $-x_2$, we will return to $(x_2,x_1]$ before we actually extend our trip to points smaller than $-x_2$). Thus, we can transform any strategy to an at least as efficient strategy of the type as above.

Given a strategy $T=T^{(r)}=(t_1,t_2, t_3, \ldots)$ of a single robot $r$, it is easy to see that for $x$ with $ t_{i-1}<x\le t_i$, it takes exactly $2(t_1+t_2+\cdots+t_i)+x$ time to visit both $x$ and $-x$ (note that indeed the sum goes up to ``$t_i$''). So in order for the robot to $\lambda$-cover $x$, we need
\begin{eqnarray}
x \ge \frac{1}{\mu}(t_1+t_2+\cdots+t_i) \mbox{~~~for $\mu := \frac{\lambda - 1}{2}$.}
\end{eqnarray}
Let us set
\begin{eqnarray}
\label{eq:BeginVsEnd}
t_i'' := \max\left\{\frac{1}{\mu} (t_1 + t_2 + \cdots + t_i), t_{i-1}\right\}
\end{eqnarray}
unless this value exceeds $t_i$, when we leave $t_i''$ undefined. All $i$ with $t_i'' $ defined we call \emph{fruitful}.
It is clear that robot $r$  $\lambda$-covers exactly
$$
\mathrm{Cov}_\mu(T) := \bigcup_{i ~\text{fruitful}} [t_i'' ,t_i] ~.
$$
Note that if we skip a turn $t_{i^*}$ in $T$, then, of course, we lose the interval $[t_{i^*}'' ,t_{i^*}]$, but following intervals (for $i > i^*$) extend even further to the left. That is, turning points that are not fruitful can be skipped, in this way definitely $\lambda$-covering at least as much. Moreover, it becomes clear that if $t_{i+1} = t_{i}$ (hence $t_{i+1}'' = t_{i+1}$, if defined), then we can skip $t_{i+1}$ in the strategy thereby covering at least as much.
\medskip

We move now to considering all robots, not just individual ones.
If we are given a $\pm$-covering by $k$ robots,
then each point of $\RRone$ is contained in  $\mathrm{Cov}_\mu(T^{(r)})$ for at least $s$ (out of the $k$) robots. By truncating some of the intervals from $[t_i'',t_i]$ to half-open intervals  $(t_i',t_i]$ (with $t_i'' \le t_i' < t_i$) or even skipping some intervals we may assume that each point of $\RROne$ is contained in {\it exactly} $s$ such assigned intervals (of $s$ different robots), and that the turning points of each robot coincide with the right ends of the corresponding intervals. Recall here, that if an interval is not needed, we can actually skip the corresponding turning point in the robot's strategy without affecting the validity of the other intervals used. We call the new half-open intervals $(t_i',t_i]$ {\it assigned intervals}. Let us emphasize, as a consequence of $ t_i' \ge t_i''$ and (\ref{eq:BeginVsEnd}), that
\begin{eqnarray}
\label{eq:ShiftedBeginVsEnd}
t_i' \ge \frac{1}{\mu} (t_1 + t_2 + \cdots + t_i)
\end{eqnarray}
or, equivalently,
\begin{eqnarray}
\label{eq:EndVsShiftedBegin}
t_i \le \mu t_i' -  (t_1 + t_2 + \cdots + t_{i-1}) ~.
\end{eqnarray}

Next we accumulate all assigned intervals of all $k$ robots in one sequence, sorted by their left endpoints, ties broken arbitrarily. Consider a prefix ${\cal P}$ of this sequence, long enough so that all robots have already some interval present in ${\cal P}$. What part of $\RROne$ do these intervals cover, and how often? There is a value $a = a({\cal P}) \ge 1$ such that $(1,a]$ is covered $s$ times, and then there are points $$a=:a_s\le a_{s-1}\le a_{s-2}\le\ldots\le a_1 ~,$$ such that every point in $(a_{j+1},a_j]$ is covered exactly $j$ times, $j=1,\ldots, s-1$, and $(a_1,\infty)$ is not yet covered. (Note that some of these intervals may be empty.) Associate with ${\cal P}$ the \emph{multiset} $$A({\cal P}) := \{ a_s, a_{s-1}, \ldots, a_1\}$$ (which can be seen as a description of the \emph{covering situation} of ${\cal P}$).

We need one more notion. Define the {\it load}, $L^{(r)}({\cal P})$, of robot $r$ in ${\cal P}$ as the sum of all turning points occuring in the intervals of $r$ in ${\cal P}$, i.e.\
$$
L^{(r)}({\cal P}) : = t_1^{(r)} + t_2^{(r)} + \cdots + t_{i_r}^{(r)} \mbox{~~ for $i_r$ s.t. $\left\{\begin{array}{ll} \mbox{$({t'_{i_r}}^{\!\!\!(r)},t_{i_r}^{(r)}]$ in ${\cal P}$, and} \\ \mbox{$({t'_{i_r+1}}^{\!\!\!\!\!\!\!\!\!\!(r)\,\,},t_{i_r+1}^{(r)}]$ not in ${\cal P}$.} \end{array}\right.$}
$$  We immediately observe
\begin{equation}
\label{eq:LoadBound}
L^{(r)}({\cal P}) \le \mu {t'_{i_r}}^{\!\!\!(r)} \le \mu a \mbox{~~ (due to (\ref{eq:ShiftedBeginVsEnd})).}
\end{equation}
As we extend the prefix ${\cal P}$ by the next assigned interval, this interval has to be of the form $({t'_i}^{(r^*)},t_i^{(r^*)}]$, for some robot $r^*$. Note
$$
{t'_i}^{(r^*)}  = a \mbox{~~and~~} t_i^{(r^*)} = \mu^* a - L^{(r^*)}({\cal P}) \mbox{~, for some $\mu^* \le \mu$ (due to (\ref{eq:EndVsShiftedBegin})).}
$$
Set ${\cal P}^+$ to be the prefix with ${\cal P}$ extended by this next interval $({t'_i}^{(r^*)},t_i^{(r^*)}]$. We immediately see that loads of robots do not change as we move from ${\cal P}$ to ${\cal P}^+$, except for robot $r^*$ for which we get
$$
L^{(r^*)}({\cal P}^+) = L^{(r^*)}({\cal P}) + t_i^{(r^*)} = \mu^* a ~.
$$
Also $a_s$ in $A({\cal P})$ gets replaced by $t_i^{(r^*)} = \mu^* a - L^{(r^*)}({\cal P})$ in $A({\cal P}^+)$, all other elements stay the same. (Note that still, $a({\cal P}^+) = a({\cal P})$ is a possibility if $a_s = a_{s-1}$.)

We want to analyze the function
\begin{equation}
f({\cal P}):=\prod_{r=1}^k \left[\frac{\left(L^{(r)}({\cal P})\right)^s}{\prod_{y \in A({\cal P})}y}\right]
\end{equation}
and understand how it changes as we move from ${\cal P}$ to ${\cal P}^+$. With $L^{(r)}({\cal P}) \le \mu a$ for all robots $r$ and $y \ge a$ for all $y \in A({\cal P})$ we get
\begin{equation}
f({\cal P}) \le \left( \frac{(\mu a)^s}{a^s}\right)^k = \mu^{ks} ~,
\label{eq:fBound}
\end{equation}
i.e.\ $f$ stays bounded. With the changes of $A$ and loads from ${\cal P}$ to ${\cal P}^+$ described, we see that
$$
\frac{f({\cal P}^+)}{f({\cal P})} = \frac{a^k}{\left(L^{(r^*)}({\cal P})\right)^s} \cdot \frac{(\mu^* a)^s}{\left(\mu^* a - L^{(r^*)}({\cal P})\right)^k} = \frac{{\mu^*}^s}{x^s (\mu^* - x)^k}
$$
for $x := \frac{L^{(r^*)}({\cal P})}{a}$, $0 < x < \mu^*$. We show that this ratio is larger than 1 if $\mu^* \le \mu$ is too small, independent of $x$. We use the following lemma.
\begin{lemma}\label{lemmax}
For $\mu^*>0$ the polynomial $x^s(\mu^*-x)^k$ maximizes for $x = \frac{s\mu^*}{k+s}$ in the range $x \in \RR$, $0 < x < \mu^*$.
\end{lemma}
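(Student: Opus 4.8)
The plan is to treat this as an elementary single-variable optimization. Write $g(x) := x^s(\mu^*-x)^k$; since in our setting $s,k \ge 1$ are positive integers, $g$ is a polynomial (hence smooth) on all of $\RR$, with $g(0) = g(\mu^*) = 0$ and $g(x) > 0$ for every $x \in (0,\mu^*)$. Consequently $g$ attains its maximum over the compact interval $[0,\mu^*]$ at some \emph{interior} point, and that point must be a critical point of $g$. So it suffices to locate the critical points in $(0,\mu^*)$ and check the sign pattern of $g'$.

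The computation is routine: $g'(x) = s x^{s-1}(\mu^*-x)^k - k x^s(\mu^*-x)^{k-1} = x^{s-1}(\mu^*-x)^{k-1}\bigl(s(\mu^*-x) - kx\bigr) = x^{s-1}(\mu^*-x)^{k-1}\bigl(s\mu^* - (s+k)x\bigr)$. On the open interval $(0,\mu^*)$ the factor $x^{s-1}(\mu^*-x)^{k-1}$ is strictly positive, so $g'$ has the same sign as the affine function $s\mu^* - (s+k)x$: positive for $x < \tfrac{s\mu^*}{s+k}$ and negative for $x > \tfrac{s\mu^*}{s+k}$. Hence $g$ is strictly increasing on $(0,\tfrac{s\mu^*}{s+k}]$ and strictly decreasing on $[\tfrac{s\mu^*}{s+k},\mu^*)$, so $x=\tfrac{s\mu^*}{s+k}$ is the unique maximizer, which is exactly the claim (with $s+k$ written as $k+s$).

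As an alternative that additionally pins down the value of the maximum — which is convenient for applying the lemma afterwards — one can invoke the (weighted) AM--GM inequality applied to the $s+k$ numbers consisting of $x/s$ taken $s$ times and $(\mu^*-x)/k$ taken $k$ times: this gives $\bigl(\tfrac{x}{s}\bigr)^s\bigl(\tfrac{\mu^*-x}{k}\bigr)^k \le \bigl(\tfrac{\mu^*}{s+k}\bigr)^{s+k}$, i.e.\ $g(x) \le \tfrac{s^s k^k}{(s+k)^{s+k}}(\mu^*)^{s+k}$, with equality precisely when $x/s = (\mu^*-x)/k$, that is $x = \tfrac{s\mu^*}{s+k}$.

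There is no genuine obstacle here; the only points deserving a word of care are the degenerate boundary values $x \in \{0,\mu^*\}$ (which are excluded from the stated range but are needed to argue that the maximum is attained in the interior) and the fact that $s,k \ge 1$ so that the common factor $x^{s-1}(\mu^*-x)^{k-1}$ is legitimately present and nonnegative on $(0,\mu^*)$. Downstream, the lemma is used to bound $\tfrac{f({\cal P}^+)}{f({\cal P})} = \tfrac{{\mu^*}^s}{x^s(\mu^*-x)^k}$ from below by $\tfrac{{\mu^*}^s(s+k)^{s+k}}{s^s k^k (\mu^*)^{s+k}} = \tfrac{(s+k)^{s+k}}{s^s k^k (\mu^*)^{k}} \ge \tfrac{(s+k)^{s+k}}{s^s k^k \mu^{k}}$, which is strictly larger than $1$ under the hypothesis of Theorem~\ref{th:PlusMinusCover}, contradicting the boundedness of $f$ established in \eqref{eq:fBound}.
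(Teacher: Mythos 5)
Your proof is correct and follows essentially the same route as the paper: compute the derivative, factor out $x^{s-1}(\mu^*-x)^{k-1}$, and identify $x=\frac{s\mu^*}{k+s}$ as the unique critical point in $(0,\mu^*)$, which must be the maximizer since the polynomial vanishes at the endpoints. The AM--GM remark is a nice optional addendum (it also yields the explicit maximal value used in Lemma~\ref{lemgeom}), but the core argument matches the paper's.
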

\begin{proof}
The derivative of the polynomial is $(\mu^*-x)^{k-1}x^{s-1}(s(\mu^*-x)-kx)$, which has only one zero $x=\frac{s\mu^*}{k+s}$ in  $(0,\mu^*)$. Since the values at $0$ and $\mu^*$ of the polynomial are 0, the zero of the derivative corresponds to the maximum of the function.
\end{proof}
\begin{lemma}\label{lemgeom} For $0 < x < \mu^*$,
$
\frac{{\mu^*}^s}{x^s(\mu^* - x)^k} \ge \frac{(k+s)^{k+s}}{s^sk^k{\mu^*}^{k}}
$
and thus
$
\frac{{\mu^*}^s}{x^s(\mu^* - x)^k} \ge \delta
$
for $\delta := \frac{(k+s)^{k+s}}{s^sk^k\mu^{k}} > 1$, provided $\mu < \sqrt[k]{\frac{(k+s)^{k+s}}{s^sk^k}}$.
\end{lemma}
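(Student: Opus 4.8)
The plan is to deduce the first inequality directly from Lemma~\ref{lemmax} and then peel off the dependence on $\mu$. The numerator $\mu^{*s}$ does not depend on $x$, so minimizing $\frac{\mu^{*s}}{x^s(\mu^*-x)^k}$ over $x\in(0,\mu^*)$ amounts to maximizing the denominator $x^s(\mu^*-x)^k$ there. By Lemma~\ref{lemmax} that maximum occurs at $x=\frac{s\mu^*}{k+s}$, where $\mu^*-x=\frac{k\mu^*}{k+s}$; substituting, the denominator equals $\frac{s^sk^k}{(k+s)^{k+s}}\mu^{*(k+s)}$. Hence $\frac{\mu^{*s}}{x^s(\mu^*-x)^k}\ge\frac{(k+s)^{k+s}}{s^sk^k}\cdot\mu^{*s-(k+s)}=\frac{(k+s)^{k+s}}{s^sk^k\mu^{*k}}$, which is the first claim. (Equivalently, one can bypass calculus and obtain the same bound from the weighted AM--GM inequality applied to the $k+s$ numbers consisting of $\frac{x}{s}$ taken $s$ times and $\frac{\mu^*-x}{k}$ taken $k$ times, since their weighted arithmetic mean is $\frac{\mu^*}{k+s}$.)

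For the ``and thus'' part, recall that $\mu^*\le\mu$, which holds by \eqref{eq:EndVsShiftedBegin} and the choice of the assigned intervals. Therefore $\mu^{*k}\le\mu^{k}$, so $\frac{(k+s)^{k+s}}{s^sk^k\mu^{*k}}\ge\frac{(k+s)^{k+s}}{s^sk^k\mu^{k}}=\delta$, which chains with the first inequality to give $\frac{\mu^{*s}}{x^s(\mu^*-x)^k}\ge\delta$. Finally, $\delta>1$ is equivalent to $\mu^k<\frac{(k+s)^{k+s}}{s^sk^k}$, i.e.\ to the stated hypothesis $\mu<\sqrt[k]{\frac{(k+s)^{k+s}}{s^sk^k}}$, which finishes the argument.

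I do not expect a genuine obstacle here: the proof is essentially a one-line substitution of the critical point from Lemma~\ref{lemmax} followed by elementary algebra. The only points that require a bit of care are bookkeeping ones — invoking $\mu^*\le\mu$ rather than treating $\mu^*$ as equal to $\mu$, and checking that the powers of $\mu^*$ collapse correctly, $\mu^{*s}/\mu^{*(k+s)}=\mu^{*-k}$, when the maximizer is plugged in.
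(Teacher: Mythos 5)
Your proof is correct and follows the same route as the paper: substitute the maximizer $x=\frac{s\mu^*}{k+s}$ from Lemma~\ref{lemmax} to get the first inequality, then use $\mu^*\le\mu$ and the hypothesis on $\mu$ for the second. You merely spell out the algebra (and add an optional AM--GM aside) that the paper leaves implicit.
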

\begin{proof}
The first inequality follows immediately from Lemma~\ref{lemmax} after substituting the value $x=\frac {s\mu^*}{k+s}$. The second inequality is obvious since we know that $\mu^*\le \mu$.
\end{proof}

Applying Lemma~\ref{lemgeom}, we get that  $\frac{f({\cal P}^+)}{f({\cal P})}\ge \delta>1$, provided $\mu < \sqrt[k]{\frac{(k+s)^{k+s}}{s^sk^k}}$. It implies that $f({\cal P})$ is unbounded for larger and larger prefixes ${\cal P}$. This contradicts the bound on  $f({\cal P})$ from (\ref{eq:fBound}). Therefore, to have a valid strategy, one must have $\mu \ge \sqrt[k]{\frac{(k+s)^{k+s}}{s^sk^k}}$, which concludes the proof of the Theorem~\ref{th:PlusMinusCover} and thus of Theorem~\ref{thm1}.

\section{Generalization to $m$ Rays}\label{sec3}

The following natural generalization of the original cow-path problem was studied by several authors \cite{BYC2}, \cite{BFZ},  \cite{KMS}, \cite{KRT}, \cite{Sch}.  Consider  $m$ rays emanating from 0, and assume that there is a hidden target on one of the rays. We  send a robot from $0$ at unit speed, and the goal is to find (pass over) the target. It was shown in \cite{BYC} that the best possible competitive ratio for the problem is $1+2\frac{m^m}{(m-1)^{m-1}}$, and this is tight.

This problem got a lot of attention due to its numerous connections to other, seemingly unrelated, problems. In particular, it is related to hybrid algorithms \cite{ABM}, \cite{FRR}, \cite{KMS} for on-line problems. We quote from \cite{KMS} (with ``$m$'' for ``$w$'' and ``$k$'' for ``$\lambda$''):
\begin{displayquote}
{\sl ``We study on-line strategies for solving problems with hybrid algorithms.
There is a problem $Q$ and $m$ \emph{basic} algorithms for solving $Q$. For some
$k \le m$, we have a computer with $k$ disjoint memory areas, each of which
can be used to run a basic algorithm and store its intermediate results. In
the worst case, only one basic algorithm can solve $Q$ in finite time, and all
of the other basic algorithms run forever without solving $Q$. To solve $Q$
with a \emph{hybrid} algorithm constructed from the basic algorithms, we run a
basic algorithm for some time, then switch to another, and continue this
process until $Q$ is solved. The goal is to solve Q in the least amount of
time.''}
\end{displayquote}


Interpret the calculations done while performing the $i$-th basic algorithm as the $i$-th ray emanating at $0$ (where $0$ corresponds to the initial state). Then, being at a point $x_i$ in the calculations of the $i$-th algorithm, it costs us at most $x_i+x_j$ to pass from this point to the point $x_j$ in the calculations of the $j$-th algorithm. This interpretation was given in \cite{FRR}, using which the authors exhibited an algorithm, which is competitive with respect to each of a given set of algorithms.


Another related field is contract algorithms. The connection was established in \cite{BFZ}, and the setting is as follows. A processor is supposed to advance in $m$ computational problems, until it is interrupted. Upon interruption, it is given the index $i\in \{1,\ldots, m\}$, and is returns its most advanced computation on the $i$-th problem. Again, interpreting each problem as a ray emanating from $0$, we easily see the connection to the $m$-path problem.

Both papers \cite{BFZ} and \cite{KMS} were concerned with the variation of the $m$-path problem, in which $k$ robots are conducting the search simultaneously. When introducing several robots, we may use two possible measures for competitive ratio: $\frac Td$ and $\frac Dd$, where $T$ is the time spent until the target was found (given that all robots travel at unit speed), and $D$ is the total distance travelled by all robots until the target was found. The distance version of the problem was resolved in \cite{KMS}, in which the authors determined the optimal competitive ratio. Somewhat unfortunately, the optimal algorithm does not really use multiple robots simultaneously: all but one robot search on one ray each, while the last robot performs the search on all remaining rays. The time version of the problem was addressed in \cite{BFZ}, where it was resolved for {\it cyclic strategies}. Informally speaking, a cyclic strategy is a strategy in which the advancements in the search on the rays is happening in cyclic order, and at each step each robot is assigned a farther distance to explore on a ray than it previously explored on other rays.\vskip+0.1cm

We address the natural faulty generalization of the problem, when we have $m$ rays and $k$ robots, $f$ out of which are faulty. Let us denote the corresponding function by $A(m,k,f)$. That is, $A(k,f) = A(2,k,f)$. The following theorem gives the precise value of $A(m,k,f)$ in all (meaningful) cases. In particular, the $f=0$ case of our theorem answers the question posed in several research papers mentioned before. It strengthens the result from \cite{BFZ}, answering their question whether their result is possible to generalize to all strategies, gives appropriate analogue of the result from \cite{KMS} for the competitive ratios measured in terms of time, as well as answers the question posed by Baeza-Yates, Culberson, and Rawlins in \cite{BYC2}.

\begin{theorem}\label{thmmain2} Given that $f< k < m(f+1)$ and $q := m(f+1)$, we have
\begin{equation}\label{eqm} A(m,k,f)= \lambda_0:= 2\sqrt[k]{\frac{q^{q}}{(q-k)^{q-k}k^k}}+1.\end{equation}
\end{theorem}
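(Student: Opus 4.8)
The plan is to mirror the structure of the proof of Theorem~\ref{thm1}, upgrading the combinatorial setup from two "symmetric" directions to $m$ rays and from $s = 2(f+1)-k$ to $q-k = m(f+1)-k$ required simultaneous covers. The upper bound (the existence of a strategy achieving $\lambda_0$) should follow from the cyclic strategies of \cite{BFZ} specialized appropriately, or from a direct generalization of the strategy from \cite{FR}; I would dispatch that first and concentrate the real work on the matching lower bound. For the lower bound, the first step is an adversary argument exactly parallel to the opening of Section~\ref{sec2}: to guarantee detection of a target at distance $x$ on any one of the $m$ rays within time $\lambda x$, at least $f+1$ robots must \emph{probe} to depth $\ge x$ on that ray in time $\le \lambda x$; summing over all $m$ rays and subtracting the $k$ robots shows that, for each "level" $x$, the rays collectively demand $m(f+1) = q$ probes of depth $\ge x$, of which at most $k$ can be "free", so at least $q-k$ robots must deep-probe \emph{every} ray to depth $x$ within time $\lambda x$. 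I would package this as an $(q-k)$-fold $\lambda$-covering problem in an "$m$-ray-cover setting" analogous to the symmetric line-cover setting, and state the clean combinatorial claim: it is impossible to $(q-k)$-fold $\lambda$-cover the union of the $m$ rays (restricted to distance $\ge 1$) with $k$ robots when $\lambda < \lambda_0$.

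The next step is to reduce a single robot's strategy to a monotone turning sequence. On $m$ rays this is more delicate than on the line: a robot cycles through the rays in some (not necessarily fixed) order, and at each visit to ray $j$ it extends its reach on that ray. As in the line case, one argues that turning around inside already-explored territory is wasteful, so the depths reached on each ray are increasing across successive visits to that ray; one then records the strategy as a sequence $(t_1, t_2, t_3, \dots)$ of successive turning depths (each tagged with a ray), with $t_i$ the depth of the $i$-th excursion. The key timing identity generalizes verbatim: to reach depth $x$ on the ray of the $i$-th excursion, the robot spends time $2(t_1 + \cdots + t_{i-1}) + x$ (the factor $2$ from out-and-back on earlier rays, then a one-way trip of length $x$ on the current one), so $\lambda$-covering depth $x$ on that excursion requires $x \ge \frac1\mu(t_1 + \cdots + t_{i-1})$ — note the sum runs to $i-1$ here, not $i$, because the current ray's excursion contributes only one way — with $\mu := \frac{\lambda-1}{2}$. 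I would define $t_i''$, fruitful excursions, assigned intervals $(t_i', t_i]$, and loads $L^{(r)}(\mathcal P)$ exactly as before, with the analogue of \eqref{eq:ShiftedBeginVsEnd} and \eqref{eq:EndVsShiftedBegin}.

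The potential-function argument then goes through with $s$ replaced by $q-k$: accumulate all assigned intervals of all $k$ robots sorted by left endpoint, take a prefix $\mathcal P$ long enough that every robot appears, record the covering situation $A(\mathcal P) = \{a_{q-k}, \dots, a_1\}$ and the loads, and study
$$
f(\mathcal P) := \prod_{r=1}^k \left[\frac{\bigl(L^{(r)}(\mathcal P)\bigr)^{q-k}}{\prod_{y \in A(\mathcal P)} y}\right].
$$
Appending the next interval $({t'_i}^{(r^*)}, t_i^{(r^*)}]$ with ${t'_i}^{(r^*)} = a$ and $t_i^{(r^*)} = \mu^* a - L^{(r^*)}(\mathcal P)$ for some $\mu^* \le \mu$ changes $f$ by a factor $\frac{{\mu^*}^{q-k}}{x^{q-k}(\mu^*-x)^k}$ with $x := L^{(r^*)}(\mathcal P)/a$, and Lemmas~\ref{lemmax}--\ref{lemgeom} (applied with $s \rightsquigarrow q-k$) give that this factor is at least $\delta := \frac{q^q}{(q-k)^{q-k}k^k\mu^k} > 1$ whenever $\mu < \sqrt[k]{\frac{q^q}{(q-k)^{q-k}k^k}}$, while $f(\mathcal P) \le \mu^{k(q-k)}$ stays bounded — a contradiction, forcing $\mu \ge \sqrt[k]{\frac{q^q}{(q-k)^{q-k}k^k}}$, i.e.\ $\lambda \ge \lambda_0$. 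I expect the main obstacle to be the single-robot normalization on $m$ rays: ensuring that an arbitrary schedule (which may interleave rays in an arbitrary, adaptive order and revisit rays out of turn) can be massaged — without loss of covering power — into the clean "one excursion at a time, monotone depth per ray" form, so that the timing identity $2(t_1 + \cdots + t_{i-1}) + x$ holds exactly. This is where the paper's promised "useful relaxation of the original problem" presumably enters, and I would look to formulate the $m$-ray-cover setting so that this reduction is forced by its rules rather than argued ad hoc.
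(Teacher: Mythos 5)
There is a genuine gap, and it sits exactly where you placed your bet. Your reduction claims that since each of the $m$ rays needs $f+1$ timely probes at depth $x$, "at least $q-k$ robots must deep-probe \emph{every} ray to depth $x$ within time $\lambda x$." The pigeonhole that gives $s=2(f+1)-k$ for two rays does not scale this way: if $a$ robots probe all $m$ rays and the remaining $k-a$ probe at most $m-1$ of them, the count $ma+(m-1)(k-a)\ge m(f+1)$ only yields $a\ge q-(m-1)k$, not $q-k$. For $m\ge 3$ this is strictly weaker and can be nonpositive (e.g.\ $m=3$, $f=0$, $k=2$ gives $q-(m-1)k=-1$), while the theorem still asserts a nontrivial bound there; so the "$(q-k)$-fold $m$-ray-cover" relaxation you propose is simply not implied by the search problem. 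There is also an internal inconsistency: your adversary argument defines a cover as a robot reaching depth $x$ on \emph{all} rays, but your timing identity $2(t_1+\cdots+t_{i-1})+x$ (sum to $i-1$) charges only a single one-way excursion, which is the accounting for a \emph{single-ray} visit. And with that one-way accounting the transplanted potential argument breaks at the boundedness step: the covering condition constrains only $t_1+\cdots+t_{i-1}$, not the depth $t_{i}$ of the current excursion, so the last turning point in a robot's prefix is not controlled by $a$, the bound $L^{(r)}(\mathcal P)\le\mu a$ fails, and $f(\mathcal P)\le\mu^{k(q-k)}$ is unjustified.

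The paper's route is different precisely to deal with these two points. It relaxes to the \textsc{ORC} setting with multiplicity $q=m(f+1)$ (drop ray labels, count repeated visits only if the robot returns to $0$ in between), proves the quantitative statement \eqref{eqmain2} on a finite window $[1,N]$ with $N$ independent of the strategy, and uses a modified potential $\prod_r (L^{(r)})^{q-k}(b^{(r)})^k\big/\prod_{y\in A}y$, where $b^{(r)}$ is the start of the robot's next assigned interval and the load bound is $L^{(r)}\le\mu b^{(r)}$ as in \eqref{eqload}. Boundedness of this potential is then not automatic: it holds only when consecutive interval starts of each robot grow by at most a constant factor (Case~1); a large jump (Case~2) is handled by rescaling and invoking induction on $(k,q)\to(k-1,q-1)$, since the jumping robot covers the gap interval at most once. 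None of this machinery (the $q$ vs.\ $q-k$ multiplicity, the $b^{(r)}$ factor, the finite-$N$ formulation, the two-case analysis with induction) appears in your outline, and without it the argument does not close. Finally, the upper bound is not off-the-shelf either: \cite{BFZ} treats $f=0$ and cyclic strategies and \cite{FR} treats $m=2$, so a strategy for general $(m,k,f)$ must be exhibited, as the paper does in the appendix with an explicit exponential scheme.
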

Note that the restriction on $k$ simply excludes trivial cases: first, $f=k$ means that all robots are faulty. Second, if $k= m(f+1)$ (or larger), then sending $f+1$ robots on each of the $m$ rays will guarantee a competitive ratio of $1$.

It is easy to see that, substituting $m=2$ in \eqref{eqm}, one gets \eqref{eq01}. \vskip+0.1cm

The proof of the upper bound is deferred to the appendix. The proof of the lower bound uses the following covering relaxation of the original problem.
\vskip+0.1cm
{\sc One-ray cover with returns (ORC) setting: }  The goal is to cover (a subset of) $\RRone$. The robot (robots) starts at $0$ and move with unit speed along the ray $\RR_{\ge 0}$. One robot may cover a point multiple times, but different coverings are only counted if the robot visited $0$ in between.\vskip+0.1cm

Fix $k\in \mathbb N$ and $\lambda>1$. Then  a $q$-fold $\lambda$-cover of $\RRone$ in the ORC setting may be seen as a relaxation of the $m$-ray cover problem. Indeed, in a sense we simply discard the labels of the rays, but still ask the robots to return to $0$, imitating the change of the rays, which happened in the original problem. Thus, any strategy for searching a target on $m$ rays with $k$ robots, $f$ out of which are faulty, with competitive ratio $\lambda$, induces a strategy for the $q$-fold $\lambda$-covering of $\RRone$ with $k$ robots in the ORC setting, where $q=(f+1)m$.
\vskip+0.1cm

Let $C(k,q)$ be the infimum of all $\lambda$, for which there exists a strategy for a $q$-fold $\lambda$-covering of $\RRone$ with $k$ robots in the ORC setting. We prove the following:

\begin{equation}\label{eqmain}
  C(k,q)\ge 2\sqrt[k]{\frac{q^{q}}{(q-k)^{q-k}k^k}}+1.\end{equation}

Clearly, $A(m,k,f)\ge C(k,m(f+1))$, and thus the lower bound Theorem~\ref{thmmain2} follows from \eqref{eqmain}. At the same time, the bound \eqref{eqmain} is tight, as follows from the ``$\le$'' part of \eqref{eqm}.
Before proving \eqref{eqmain}, let us describe yet another covering problem, which is a fractional analogue of the multicovering problem in the ORC setting.\\

{\sc Fractional one-ray retrieval with returns: } We are given a finite number of robots, of total weight $1$ each of which moves with constant speed 1 along the ray $\RR_{\ge 0}$. We are supposed to cover an unknown target at distance $x\ge 1$ with several robots of some prescribed total weight $\eta,$ $\eta\in \RRone$. The same robot may cover any point any number of times, but different coverings are counted only if the robot visits $0$ in between. We are interested in the worst-case competitive ratio $C(\eta)$ of the best algorithm for the problem.\vskip+0.1cm


\textbf{Remark. } Here's a related, but not equivalent formulation, which is closer to the original $m$-ray problem. We are given a finite number of robots, each of which moves with constant speed 1 along the ray $\RR_{\ge 0}$ and is supposed to match an unknown target at distance $x\ge 1$ with one of the sample targets in a list of size $\eta,$ $\eta\in \RRone$. The total size of the memory (that is, the total portion of the list they may carry) of all robots is fixed and is equal to $1$. Robots may change the set of samples they carry only when visiting $0$. In the worst case, the target is matched only after being compared with all samples.  We are interested in the worst-case competitive ratio $C'(\eta)$ of the best algorithm for the problem.\\

The statement we prove is as follows:
\begin{equation}\label{eqmain3}
C(\eta) = 2\frac{\eta^{\eta}}{(\eta-1)^{\eta-1}}+1.\end{equation}
Naturally, the same equality holds for $C'(\eta)$ as well.
\vskip+0.1cm
We defer the proof of the reduction of \eqref{eqmain3} to \eqref{eqm} to the appendix.

\subsection{Proof of the lower bound in \eqref{eqmain}}

For technical reasons, we will prove the following slightly stronger statement about covering a finite part of $\RRone$: for any $\epsilon>0$, there exists $N$, such that if $k$ robots produce a $q$-fold $\lambda$-cover of $[1,N]$ in the ORC setting, then

\begin{equation}\label{eqmain2}
  \lambda \ge 2\sqrt[k]{\frac{q^{q}}{(q-k)^{q-k}k^k}}+1-\epsilon.\end{equation}
The main point here is that the needed $N$  is independent of the strategy.
Let us denote by $\mu(q,k)$ the root in the right hand side of the displayed inequality above. Note that $\mu(q,k)=\mu(cq,ck)$ for any $c>0$ and thus $\mu(q,k)<\mu(q-1,k-1)$, provided that $q>k>1$. Put $$\epsilon':=2\mu(q-1,k-1)-2\mu(q,k).$$ \vskip+0.1cm

 In the rest of the proof we show the validity of \eqref{eqmain2}.\vskip+0.1cm

 The proof of the theorem follows similar steps as the proof of Theorem~\ref{thm1}. We are presenting the proof in a somewhat contracted form, highlighting the changes one has to make in order to prove this theorem.\vskip+0.1cm

 The proof goes by induction on $k$.\footnote{We remark that induction is needed only for one part of the proof, which is given in Case~2 below.}   For $k=1$ we of course will not use any inductive hypothesis. For $k\ge 2$, suppose by induction that the statement of \eqref{eqmain2} is valid for $k-1$ and $q-1$. Choose $N'$, such that \eqref{eqmain2} holds for $k-1$ and $q-1$ with $N:=N'$ and $\epsilon:=\epsilon'$. In other words, for such choice of $N'$ any $(q-1)$-fold $\lambda'$-covering of $[1,N']$ by $k-1$ robots in the ORC setting satisfies \begin{equation}\label{eqind}\lambda'\ge 2\mu(q-1,k-1)+1-\epsilon' = 2\mu(q,k)+1.\end{equation}\vskip+0.1cm

Fix $\epsilon>0$ and a competitive ratio $\lambda$ not satisfying $\eqref{eqmain2}$ and put $\mu:=(\lambda-1)/2.$ Fix a collective strategy of robots for $k$ robots to produce a $q$-fold $\lambda$-covering of $[1,N]$ in the ORC setting (note that $N$ is sufficiently large and would be chosen later).

\paragraph{Standardising the strategy. }
The {\it round} of a strategy for a robot is the period between two consecutive visits of $0$. We can assume that in each round the robot turns exactly once at a turning point $t$. Therefore, we can describe a strategy by an infinite vector $T = (t_1,t_2,t_3, \ldots)$ of reals in $\RRone$, $t_i$ the turning point in round $i$. Obviously, point $x \in \RRone$ is $\lambda$-covered in round $i$ iff (i) $x \le t_i$ and (ii) $2(t_1+t_2+ \cdots + t_{i-1}) + x \le \lambda x$ ($\Leftrightarrow x \ge \nicefrac{1}{\mu}(t_1+t_2+ \cdots + t_{i-1})$ for $\mu:= \nicefrac{\lambda -1}{2}$). We set $t''_i:= \nicefrac{1}{\mu}(t_1+t_2+ \cdots + t_{i-1})$. If $t''_i > t_i$, round $i$ does not $\lambda$-cover any point, and we may as well skip this round (future rounds will $\lambda$-cover even more in this way). Otherwise, we call the round \emph{fruitful} and the robot $\lambda$-covers exactly the interval $[t''_i,t_i]$ in round $i$. Clearly, we can assume that we are using only strategies with only fruitful rounds. Observe that the sequence $(t''_1,t''_2,t''_3, \ldots)$ is monotone increasing.

Suppose now we have $k$ such strategies $T^{(r)} = (t^{(r)}_1,t^{(r)}_2,t^{(r)}_3, \ldots)$, $r=1,2,\ldots,k$, that establish a $q$-fold $\lambda$-covering of $\RRone$. Let us now assign truncated intervals $({t'_{i}}^{(r)},t_{i}^{(r)}] \subseteq [{t''_{i}}^{(r)},t_{i}^{(r)}] \cap [1,\infty)$, in such a way that every point in $\RROne$ is covered exactly $q$ times, and such that each sequence $({t'_1}^{(r)},{t'_2}^{(r)},{t'_3}^{(r)}, \ldots)$ is monotone increasing (this may actually also result in skipping some of the turning points).

\paragraph{Defining the function.}
Next we accumulate all assigned intervals of all $k$ robots in one sequence, sorted  by their left endpoints, ties broken arbitrarily. Consider a prefix ${\cal P}$ of this sequence, long enough so that all robots have already some interval present in ${\cal P}$. Clearly, it consists of all rounds of the strategy of the robot $r$ up to some $i_r$, for each $r=1,\ldots, k$. What part of $\RRone$ do these intervals cover, and how many times? There is a value $a = a({\cal P}) \ge 1$ such that $(1,a]$ is covered $q$ times, and then there are points $$a=:a_q\le a_{q-1}\le a_{s-2}\le\ldots\le a_1 ~,$$ such that every point in $(a_{j+1},a_j]$ is covered exactly $j$ times, $j=1,\ldots, q-1$, and $(a_1,\infty)$ is not yet covered. (Note that some of these intervals may be empty.) Associate with ${\cal P}$ the \emph{multiset} $$A({\cal P}) := \{ a_q, a_{q-1}, \ldots, a_1\}$$ (which can be seen as a description of the \emph{covering situation} of ${\cal P}$).

We need one more notion. Define the {\it load}, $L^{(r)}({\cal P})$, of robot $r$ in ${\cal P}$ as the sum of all turning points occurring in the intervals of $r$ in ${\cal P}$, i.e.\
$$
L^{(r)}({\cal P}) : = t_1^{(r)} + t_2^{(r)} + \cdots + t_{i_r}^{(r)} \mbox{~~ for $i_r$ s.t. $\left\{\begin{array}{ll} \mbox{$({t'_{i_r}}^{\!\!\!(r)},t_{i_r}^{(r)}]$ in ${\cal P}$, and} \\ \mbox{$({t'_{i_r+1}}^{\!\!\!\!\!\!\!\!\!\!(r)\,\,},t_{i_r+1}^{(r)}]$ not in ${\cal P}$.} \end{array}\right.$}
$$

Let $b^{(r)}:={t'}_{i_{r+1}}^{(r)}$ be the beginning of the first interval, assigned to robot $r$ and which is not in $\mathcal P$. Then, clearly, the analogue of \eqref{eq:LoadBound} is \begin{equation}\label{eqload} L^{(r)}(\mathcal P)\le \mu b^{(r)}.\end{equation}
The following function controls the situation:
\begin{equation}\label{eqfunc2}
f({\cal P}):=\prod_{r=1}^k \left[\frac{\left(L^{(r)}({\cal P})\right)^{q-k}(b^{(r)})^k}{\prod_{y \in A({\cal P})}y}\right] ~.
\end{equation}\vskip+0.1cm

\paragraph{Adding one interval.}
Add one new interval to $\mathcal P$, thus forming the covering situation $\mathcal P^+$. If the new interval is assigned to the robot $r$, then $b^{(r)}=a$. In $\cal P^+$, the beginning $b^{(r)}$ of the first non-included assigned interval to robot $r$ is replaced by some $b'$ (equal to ${t'}_{i_{r+2}}^{(r)}$); using \eqref{eqload}, the load of $r$ changes to $\mu^{*}b'$ for some $\mu^*\le \mu$; at the same time, the element $a$ is removed from $\mathcal A$ and replaced by the end of the new interval, which is $L^{(r)}(\mathcal P^+)-L^{(r)}(\mathcal P) = \mu^{*}b'-\mathcal L^{(r)}(\cal P)$.  Thus, in $f(\cal P)$ the $a^k$ multiple in the denominator is compensated by the $(b^{(r)})^k$ in the numerator, and we have
$$
\frac{f({\cal P}^+)}{f({\cal P})} = \frac{(\mu^* b')^{q-k}\cdot b'^k}{\left(L^{(r)}({\cal P})\right)^{q-k}\left(\mu^* b' - L^{(r)}({\cal P})\right)^k} = \frac{{\mu^*}^{q-k}}{x^{q-k} (\mu^* - x)^k}
$$
for $x := \frac{L^{(r)}({\cal P})}{b'}$, $0 < x < \mu^*$.

Making use of Lemma~\ref{lemgeom}, we conclude is as follows: if the competitive $\lambda$ does not satisfy \eqref{eqmain2} with $\epsilon$, then there exists $\delta>1$, such that \begin{equation}\label{eqgrow}\frac{f({\cal P}^+)}{f({\cal P})}\ge \delta.\end{equation}

The last step of the proof is to conclude that, at the same time, $f(\cal P)$ should be bounded (and that the number of intervals in the cover, and thus the number of such potential increments, is infinite). It is slightly more difficult in this case and is the reason why we use induction. We consider two cases.

\textbf{Case 1. } For any robot $r$  and any two consecutive starting points $t'^{(r)}_i$, $t'^{(r)}_{i+1}\in \RRone$ of its assigned intervals, we have $t'^{(r)}_{i+1}/t'^{(r)}_{i}\le C$, where $C$ is an absolute constant, which we would specify later.

In this case, looking at \eqref{eqfunc2}, we have $b^{(r)}\le Ct'^{(r)}_{i_r}\le Ca$, and thus $L^{(r)}(\mathcal P)\le \mu Ca$. This allows us to bound $f(\mathcal P)$:
$$f(\mathcal P)\le C^{qk} \mu^{(q-k)k}.$$
Therefore, in a finite number of steps, we will get a contradiction with \eqref{eqgrow}.
At the same time, when passing from $\mathcal P$ to $\mathcal P^+$, the distance from the first point that is not $q$-covered to zero increases by at most $C$ times. Thus, eventually, we get a contradiction. We can find the value of $N$ needed in \eqref{eqmain2} based on these two simple facts. \vskip+0.1cm

\textbf{Case 2. } There exists a robot $r$  and two consecutive starting points $t'^{(r)}_i$, $t'^{(r)}_{i+1}\in \RRone$ of its assigned intervals, such that $t'^{(r)}_{i+1}/t'^{(r)}_{i}\ge C$.  Consider the interval $$[\mu t'^{(r)}_{i},Ct'^{(r)}_{i}].$$ Then, due to \eqref{eqload}, no endpoint of the first $i-1$ intervals assigned to $r$ could have surpassed $\mu t'^{(r)}_{i}$. Thus, the interval displayed above is $\lambda$-covered by $r$ at most once. It implies that the other $k-1$ robots produced a $(q-1)$-fold $\lambda$-covering of this interval in the ORC setting. Rescaling, we may assume that $k-1$ robots produced a $(q-1)$-fold $\lambda$-covering of the interval $[1,C/\mu]$. Using \eqref{eqind} and choosing $C$ such that $C/\mu\ge N'$, we conclude that $\lambda\ge 2\mu(q,k)+1$.
\vskip+0.2cm

\paragraph{Acknowledgements.} We thank D\' aniel Kor\' andi, Alexander Pilz, Milo{\v s} Stojakovi{\' c}, and May Szedlak for discussions on and suggestions for the material covered in this paper, and Yoshio Okamoto for suggesting the problem.

\newpage

\section{Appendix}

First, we give a proof of the upper bound in \eqref{eqmain}. We exhibit a so-called exponential strategy, similar to the many strategies that appeared for related problems. It is a fairly natural extension of the proofs of the upper bounds for some particular cases of our problem, exhibited in \cite{FR} and \cite{BFZ}.

\begin{proof}[\textsc{Proof of the upper bound in \eqref{eqmain}}]  It is sufficient, for each $x\in \RRone$, to produce an $f$-fold $\lambda_0$-covering each of points on the $m$ rays at distance $x$.

Let us first do the {\it assignment} for each of the robots: to each robot $r$ we assign intervals on different rays, which are supposed to be $\lambda_0$-covered by $r$. Fix $\alpha >1$, which we optimize later. The assignment is as follows: \begin{displayquote}robot $r$  $\lambda_0$-covers intervals $(\alpha^{k(i+mj)+m(r-f)},\alpha^{k(i+mj)+mr}]$ for each $j=-2,-1,\ldots$ on ray $i$, $i=1,\ldots, m$.\end{displayquote}
We start with $j=-2$ to ensure that before distance $1$ on each ray, each of the robots has already done at least $1$ pass on the ray. Let us first make sure that all the points at distance at least $1$ from the origin are covered at least $f$ times by such an assignment. Fix a ray $i$ and a point on it, in the form $\alpha^x$, where $x\ge 0$. Then $x$ must fall into  $(k(i+mj)+m(r-f),k(i+mj)+mr]$ for some $j$ in order to be covered by $r$. It is equivalent to $y\in (kj+r-f,kj+r]$, where $y:=\lceil\frac{x-ki}m\rceil$. Note that $y\ge -k$. Thus, $y$ falls into the interval above if $y(\!\!\!\mod k) \in (r-f(\!\!\!\mod k), r(\!\!\!\mod k)]$. This holds for $r\in [y(\!\!\!\mod k),y+f(\!\!\!\mod k)).$ Thus, the point $\alpha^x$ was covered by the intervals assigned by the robots $y-f+1,\ldots, y$ (modulo $k$).

Now let us describe a strategy for robots to cover these intervals, and calculate the corresponding competitive ratio. Each robot starts his tour with ray number $1$, and visits the rays in cyclic order. When visiting ray $i$ for the $(j+3)$-nd time ($+3$ is there since $j$ starts from $-2$), he turns at the point $\alpha^{k(i+mj)+mr}$, returns to 0 and continues to the ray $i+1$. A point at distance $x\ge 1$ from the origin on ray $i$ is covered for the $f$-th (and final) time by robot $r,$ if it falls into the segment $(\alpha^{k(i+mj)+m(r-f)},\alpha^{k(i+mj)+m(r-f+1)}]$ for some $j\in \mathbb Z$. Take $x\in (\alpha^{k(i+mj)+m(r-f)},\alpha^{k(i+mj)+m(r-f+1)}]$. Then the time passed until robot $r$ arrives at this point is at most
$$x+2\sum_{l=-2m}^{i+mj-1}\alpha^{kl+mr}< x+2\frac{\alpha^{k(i+mj)+mr}}{\alpha^k-1}.$$
Thus, the competitive ratio of such algorithm is $2\gamma +1$, where $\gamma$ is the maximum over all valid choices of $x$ of the expression $$\frac{\alpha^{k(i+mj)+mr}}{x(\alpha^k-1)}.$$
This expression is clearly decreasing as $x$ increases, therefore,
$$\gamma\le\frac{\alpha^{k(i+mj)+mr}}{\alpha^{k(i+mj)+m(r-f)}(\alpha^k-1)} = \frac{\alpha^{mf}}{\alpha^k-1}.$$
Note that the last bound is independent of the choice of the ray and $x$.  Applying Lemmas~\ref{lemmax},~\ref{lemgeom} with $\mu^{*}=1$, $x =\alpha^{-k}$, $s=\frac{mf-k}k$, we get that the displayed expression is minimised for $\alpha = \sqrt[k]{\frac{mf}{mf-k}}$, for which it is equal to $(\lambda_0-1)/2$. We conclude that the competitive value of the algorithm coincides with the right hand side of  \eqref{eqm}.\end{proof}
\vskip+0.2cm

Next, we give the reduction of \eqref{eqmain3} to \eqref{eqmain}.
\begin{proof}[\textsc{Proof of \eqref{eqmain3}}]
Let us first prove the ``$\le$'' direction: that the right hand side is an upper bound for the left hand side. Fix a sequence of rational numbers $\frac{q_i}{k_i}, i=1,\ldots$, such that $\frac{q_i}{k_i}\ge \eta$ and $\lim_{i\to \infty}\frac{q_i}{k_i}= \eta$. Consider the strategy that gives the upper bound from Theorem~\ref{thmmain2} with $f:=1$, $k:=k_i, m:=q_i$. Then the competitive ratio is
$$2\sqrt[k_i]{\frac{q_i^{q_i}}{(q_i-k_i)^{q_i-k_i}k_i^{k_i}}}+1 = 2\frac{(q_i/k_i)^{q_i/k_i}}{(q_i/k_i-1)^{(q_i/k_i-1)}}+1\to 2\frac{\eta^{\eta}}{(\eta-1)^{\eta-1}}+1.$$
On the other hand, the strategy above can be clearly used to produce a fractional one-ray $\frac{q_i}{k_i}$-covering (and thus an $\eta$-covering): just split the weight between $k_i$ robots in equal parts, and let them perform their strategy on one ray, ignoring the labels of the rays. Going to the limit, the ``$\le$'' part of \eqref{eqmain3} is proved.\\

The proof of the ``$\ge$'' direction is similar. Fix $\epsilon>0$ and consider a strategy $S$ in the fractional problem, which achieves the competitive ratio $C(\eta)+\epsilon$. Assume that the weights of the robots used in the strategy are $w_1,\ldots, w_n$, $\sum_{i=1}^n w_i=1$. Fix $\delta>0$, which choice is specified later, and integers $q,k_1,\ldots, k_n$, such that $\frac{w_i}{\eta}\le \frac{k_i}q\le \frac{w_i}{\eta}+\delta$. Put $k:=\sum_{i=1}^n k_i$. Thus, clearly, $\frac kq\ge \frac 1{\eta}$. The requirement on $\delta$ is such that $\frac{k}{q}\le \frac{1}{\eta-\epsilon}$.

Consider the strategy $S'$ for the $q$-fold covering with $k$ robots in the ORC setting, which is obtained as follows: the first $k_1$ robots repeat the actions of the first robot from strategy $S$, the next $k_2$ robots repeat the actions of the second robot from $S$ etc. Clearly, the strategy produces the $q$-fold covering and has competitive ratio $C(\eta)+\epsilon$. On the other hand, using the lower bound from \eqref{eqmain}, it has competitive ratio at least
$$2\frac{(q/k)^{q/k}}{(q/k-1)^{(q/k-1)}}+1\ge 2\frac{(\eta-\epsilon)^{\eta-\epsilon}}{(\eta-\epsilon-1)^{\eta-\epsilon-1}}+1.$$
This implies that for any $\epsilon>0$ we have
$$C(\eta)\ge 2\frac{(\eta-\epsilon)^{\eta-\epsilon}}{(\eta-\epsilon-1)^{\eta-\epsilon-1}}+1-\epsilon.$$
Passing to the limit $\epsilon\to 0$ gives the result.
\end{proof}

\end{document}